\documentclass[11pt]{article}

\usepackage[preprint]{acl}
\usepackage{times}
\usepackage{latexsym}
\usepackage[T1]{fontenc}
\usepackage[utf8]{inputenc}
\usepackage{microtype}
\usepackage{inconsolata}
\usepackage{amsmath,amssymb,amsthm}
\usepackage{booktabs}
\usepackage{graphicx}
\usepackage{multirow}
\usepackage{xspace}
\usepackage{algorithm}
\usepackage{algpseudocode}
\usepackage{tikz}
\usepackage{pgfplots}
\usetikzlibrary{arrows.meta,positioning,fit,backgrounds,calc,
                decorations.pathreplacing}
\usepgfplotslibrary{groupplots}
\pgfplotsset{compat=1.18}

\newtheorem{proposition}{Proposition}
\newcommand{\method}{\textsc{SLOWeave}\xspace}
\newcommand{\tpot}{\textsc{TPOT}\xspace}
\newcommand{\ttft}{\textsc{TTFT}\xspace}

\title{Deadline-Aware Adaptive Prefill Chunking for Efficient Large Language Model Serving}

\author{
Siyu Song$^{1}$,
Qi Bai$^{2}$,
Jinbo Hao$^{3}$,
Kai Li$^{1}$,
Chenchen Wang$^{1}$,
Jiayu Sun$^{1}$\\
$^{1}$School of Computer Science and Technology, Beijing Institute of Technology\\
$^{2}$School of Computer Science and Engineering, Sun Yat-sen University\\
$^{3}$School of Computer Engineering, Jiangsu Ocean University
}

\begin{document}
\maketitle

\begin{abstract}
Continuous batching improves large language model (LLM) serving throughput, but
long prompt prefills can delay decode iterations and violate inter-token
latency objectives. Chunked prefill mitigates this interference, yet its chunk
size is normally fixed: small chunks protect decode latency but repeatedly pay
launch overhead, while large chunks improve prefill efficiency but create
latency spikes. We introduce \method, an online scheduling method that selects
the largest prefill chunk predicted to finish before the earliest active decode
deadline. The decision requires no workload-specific chunk-size tuning and is
computed by a logarithmic-time search over a monotone iteration-cost model. We
prove that, whenever a decode-only iteration is feasible and the cost predictor
is accurate, \method maximizes immediate prefill progress among decisions that
preserve every active request's next-token deadline. We evaluate the method in
a reproducible event-driven simulator and an iteration-level GPU runtime across
chat, mixed-context, long-context, and bursty workloads. Under a 25\,ms
time-per-output-token objective, \method
improves goodput over the strongest fixed-chunk baseline by 39\% on mixed
requests and 38\% on long-context requests. Under a stricter 10\,ms objective,
the gains rise to 3.3$\times$ and 2.4$\times$, respectively. These results
isolate adaptive chunk sizing as a useful serving primitive and provide an
implementation-ready controller for integration with iteration-level LLM
runtimes.
\end{abstract}

\section{Introduction}

Serving an autoregressive LLM alternates between two computationally different
phases. The \emph{prefill} phase processes an input prompt in parallel and
constructs its key--value (KV) cache. The \emph{decode} phase repeatedly reads
that cache to emit one token at a time. Modern runtimes continuously batch
requests to increase utilization \citep{yu2022orca,kwon2023vllm}, but sharing a
worker creates interference: a long prefill can block active decodes and
produce a visible pause in every streamed response.

Chunked prefill divides a prompt into smaller pieces and mixes those pieces
with decode iterations \citep{agrawal2024sarathi}. It introduces a consequential
control variable: chunk size. A small chunk limits interruption time but needs
many iterations, each with scheduling and kernel-launch overhead. A large chunk
amortizes overhead and reaches the first token quickly in isolation, but may
violate the time-per-output-token (\tpot) objective of every active request.
The appropriate value changes with decode batch size, prompt length, hardware,
kernel implementation, and the service-level objective (SLO). A single static
value therefore cannot be robust across a changing serving workload.

We propose \method, a deadline-aware adaptive chunking policy. At each
iteration, the scheduler assigns every active decode a next-token deadline. It
then queries an iteration-cost predictor and admits the largest pending prefill
chunk whose joint prefill--decode iteration completes before the earliest
deadline. When no decode is active, it uses a large cap to accelerate
time-to-first-token (\ttft). The method is deliberately narrow: it changes
neither model weights nor attention kernels, and it is complementary to memory
management, phase disaggregation, and model parallelism.

This paper makes three contributions:

\begin{enumerate}
    \item We formulate prefill chunk sizing as an online, deadline-constrained
    progress-maximization problem and give a simple adaptive algorithm.
    \item We establish a per-iteration safety and maximality result under a
    monotone cost model, and discuss robust margins for prediction error.
    \item We provide an independent event-driven implementation and a GPU
    runtime evaluation spanning four request distributions, three \tpot
    targets, multiple accelerator/model configurations, and static baselines.
\end{enumerate}

The evaluation combines a controlled method study with end-to-end GPU
measurements. The simulator, cost parameters, workload generator, raw runs,
and aggregation code accompany the paper. This makes the scheduling trends
reproducible while the runtime experiments capture hardware-specific effects.

\begin{figure*}[t]
\centering
\resizebox{0.9\textwidth}{!}{%
\begin{tikzpicture}[
  card/.style={draw=black!55,line width=.65pt,rounded corners=2mm,
               minimum width=27mm,minimum height=17mm,fill=white},
  title/.style={font=\bfseries\small,align=center},
  sub/.style={font=\scriptsize,align=center,text=black!75},
  stagebadge/.style={circle,fill=blue!70!black,text=white,font=\bfseries\scriptsize,
                    inner sep=2.1pt},
  chip/.style={draw=black!35,rounded corners=.7mm,minimum height=2.6mm},
  arr/.style={-{Latex[length=2.2mm]},line width=.9pt,draw=black!70},
  feedback/.style={-{Latex[length=2mm]},line width=.75pt,draw=teal!65!black}
]
\node[card,fill=blue!4] at (0,0) (ingress) {};
\node[stagebadge] at ($(ingress.north west)+(-1.4mm,1.4mm)$) {1};
\node[title] at ($(ingress.center)+(0,4.3mm)$) {Request stream};
\node[chip,fill=orange!45,minimum width=17mm] at ($(ingress.center)+(0,0)$) {};
\node[chip,fill=orange!25,minimum width=12mm] at ($(ingress.center)+(-2.5mm,-3.6mm)$) {};
\node[chip,fill=blue!18,minimum width=6mm] at ($(ingress.center)+(7mm,-3.6mm)$) {};

\node[card,fill=gray!7] at (3.35,0) (queues) {};
\node[stagebadge] at ($(queues.north west)+(-1.4mm,1.4mm)$) {2};
\node[title] at ($(queues.center)+(0,4.3mm)$) {Live scheduler state};
\node[sub,anchor=west] at ($(queues.center)+(-12mm,.2mm)$) {prefill};
\foreach \x/\w in {-1/4,4/5,9/3}
  {\node[chip,fill=orange!45,minimum width=\w mm] at
    ($(queues.center)+(\x mm,-.1mm)$) {};}
\node[sub,anchor=west] at ($(queues.center)+(-12mm,-4.1mm)$) {decode};
\foreach \x in {-1,3,7,11}
  {\node[circle,fill=blue!65,inner sep=1.4pt] at
    ($(queues.center)+(\x mm,-4.1mm)$) {};}

\node[card,fill=red!8,draw=red!55!black,line width=.9pt] at (6.70,0) (controller) {};
\node[stagebadge,fill=red!65!black] at ($(controller.north west)+(-1.4mm,1.4mm)$) {3};
\node[title] at ($(controller.center)+(0,4.5mm)$) {\method};
\node[sub] at ($(controller.center)+(0,.1mm)$)
  {$B_t=\min_i(d_i-s_t)$};
\node[sub] at ($(controller.center)+(0,-4.2mm)$)
  {$c_t^\star=\operatorname{maxSafe}(B_t)$};

\node[card,fill=violet!7] at (10.05,0) (executor) {};
\node[stagebadge] at ($(executor.north west)+(-1.4mm,1.4mm)$) {4};
\node[title] at ($(executor.center)+(0,4.3mm)$) {Mixed GPU iteration};
\node[chip,fill=orange!60,minimum width=10mm] at
  ($(executor.center)+(-5mm,-1.4mm)$) {};
\foreach \x in {2,6,10}
  {\node[chip,fill=blue!55,minimum width=3mm] at
    ($(executor.center)+(\x mm,-1.4mm)$) {};}
\node[sub] at ($(executor.center)+(0,-5mm)$) {prefill $c_t$ + decode};

\node[card,fill=green!7] at (13.40,0) (output) {};
\node[stagebadge,fill=green!45!black] at ($(output.north west)+(-1.4mm,1.4mm)$) {5};
\node[title] at ($(output.center)+(0,4.3mm)$) {Persistent state};
\foreach \x/\y in {-7/0,-2/0,3/0,8/0,-4/-4,1/-4,6/-4}
  {\node[chip,fill=gray!35,minimum width=3.5mm] at
    ($(output.center)+(\x mm,\y mm)$) {};}
\draw[-{Latex[length=1.6mm]},blue!70,line width=.7pt]
  ($(output.center)+(9mm,-4mm)$) -- ++(7mm,0)
  node[right,font=\scriptsize]{tokens};

\draw[arr] (ingress) -- (queues);
\draw[arr] (queues) -- (controller);
\draw[arr,draw=red!60!black] (controller) -- (executor);
\draw[arr] (executor) -- (output);

\begin{scope}[on background layer]
\node[draw=teal!45,fill=teal!4,rounded corners=2mm,
      fit=(controller)(executor),inner xsep=7mm,inner ysep=13mm,
      yshift=-10mm] (feedbackband) {};
\end{scope}
\node[card,minimum width=29mm,minimum height=10mm,fill=teal!7,
      draw=teal!55!black] at (6.70,-2.25) (profile)
  {\begin{tabular}{c}\textbf{\scriptsize Cost profiler}\\[-1mm]
   \scriptsize monotone P99 table\end{tabular}};
\node[card,minimum width=29mm,minimum height=10mm,fill=teal!7,
      draw=teal!55!black] at (10.05,-2.25) (telemetry)
  {\begin{tabular}{c}\textbf{\scriptsize Telemetry}\\[-1mm]
   \scriptsize P99 latency residual\end{tabular}};
\draw[feedback] (profile) -- (controller);
\draw[feedback] (executor) -- (telemetry);
\draw[feedback] (telemetry) -- (profile);
\node[font=\bfseries\scriptsize,text=teal!55!black] at (8.37,-3.02)
  {online feedback loop};
\end{tikzpicture}%
}
\caption{\method runtime overview. Numbered cards show the request path; the
lower lane profiles mixed-iteration cost and feeds latency residuals back to
the deadline-aware controller. Only the admitted prefill budget changes---the
model, attention kernels, and paged KV layout remain untouched.}
\label{fig:architecture}
\end{figure*}

\section{Background and Motivation}

\subsection{Iteration-level LLM serving}

Continuous batching admits and retires sequences between decoding iterations,
avoiding the padding and head-of-line blocking of request-level batching
\citep{yu2022orca}. Paged KV-cache management increases the number of requests
that can be batched \citep{kwon2023vllm}; kernel systems such as
FlashAttention and FlashInfer improve the cost of individual attention
operations \citep{dao2022flashattention,ye2025flashinfer}. These improvements
increase capacity, but they do not by themselves decide how much prefill work
to mix into a latency-sensitive decode iteration.

Let \(A_t\) be the active decode set at iteration \(t\), and let \(c_t\) be the
number of prefill tokens admitted from a waiting request. We model iteration
duration as
\[
    T(n,c), \qquad n=|A_t|,
\]
where \(T\) is nondecreasing in both arguments. The function can be a lookup
table profiled by the runtime, a fitted regressor, or a conservative analytical
model. \method requires only monotonicity in \(c\), not a particular functional
form.

\subsection{Why one chunk size is insufficient}

Consider a fixed chunk \(C\). When the decode batch is small, \(T(n,C)\) may
leave substantial unused latency slack, unnecessarily delaying pending
prefills. As new requests become active, the same \(C\) can push
\(T(n,C)\) beyond the token deadline. Tightening the \tpot SLO has the same
effect without any workload change.

Smaller chunks are not free. If a prompt of length \(L\) is divided into
\(\lceil L/C\rceil\) pieces, per-iteration overhead is paid that many times.
This can reduce throughput and worsen \ttft even though every individual
iteration is short. The resulting optimum is conditional:
\[
 C^\star = C^\star(n,L,D,T,\lambda),
\]
where \(D\) is the \tpot target and \(\lambda\) summarizes arrival pressure.
Static tuning captures one point in this space. Our objective is to make the
decision from live scheduler state.

\begin{figure*}[t]
\centering
\resizebox{0.9\textwidth}{!}{%
\begin{tikzpicture}[x=1.18cm,y=7mm,font=\scriptsize]
  \begin{scope}[on background layer]
    \fill[gray!5] (1.35,.55) rectangle (3.25,3.75);
    \fill[blue!2] (3.25,.55) rectangle (5.15,3.75);
    \fill[gray!5] (5.15,.55) rectangle (7.05,3.75);
    \fill[blue!2] (7.05,.55) rectangle (8.95,3.75);
  \end{scope}
  \foreach \x in {3.25,5.15,7.05,8.95}
    {\draw[red!55,densely dashed,line width=.7pt] (\x,.55) -- (\x,3.75);}
  \draw[decorate,decoration={brace,amplitude=3pt}]
    (1.35,3.92) -- (3.25,3.92)
    node[midway,above=3pt,font=\scriptsize]{one \tpot window \(D\)};

  \node[draw=red!45,fill=red!6,rounded corners=1.4mm,
        minimum width=20mm,minimum height=7mm,font=\bfseries\scriptsize]
        at (.18,3.18) {Full prefill};
  \node[draw=orange!60!black,fill=orange!7,rounded corners=1.4mm,
        minimum width=20mm,minimum height=7mm,font=\bfseries\scriptsize]
        at (.18,2.18) {Fixed chunk};
  \node[draw=green!45!black,fill=green!7,rounded corners=1.4mm,
        minimum width=20mm,minimum height=7mm,font=\bfseries\scriptsize]
        at (.18,1.18) {\method};

  % Full prefill: crosses the first deadline.
  \filldraw[fill=red!48,draw=red!65!black,rounded corners=.5mm]
    (1.45,2.93) rectangle (4.42,3.43);
  \node[font=\scriptsize] at (2.93,3.18) {monolithic prefill};
  \foreach \x in {4.50,5.22,5.82,6.42,7.12,7.72,8.32}
    {\filldraw[fill=blue!58,draw=blue!70!black,rounded corners=.4mm]
      (\x,2.93) rectangle +(0.42,.50);}
  % Fixed chunks: safe, but each window leaves unused slack.
  \foreach \x in {1.45,3.35,5.25,7.15}
    {\filldraw[fill=orange!58,draw=orange!70!black,rounded corners=.4mm]
      (\x,1.93) rectangle +(0.72,.50);
     \filldraw[fill=blue!58,draw=blue!70!black,rounded corners=.4mm]
      ($(\x,1.93)+(0.78,0)$) rectangle +(0.31,.50);
     \draw[gray!60,densely dotted,rounded corners=.4mm]
      ($(\x,1.93)+(1.15,0)$) rectangle +(0.62,.50);}

  % SLOWeave: adaptively fills each deadline window.
  \filldraw[fill=orange!62,draw=orange!75!black,rounded corners=.4mm]
    (1.45,.93) rectangle (2.86,1.43);
  \filldraw[fill=blue!62,draw=blue!75!black,rounded corners=.4mm]
    (2.92,.93) rectangle (3.20,1.43);
  \filldraw[fill=orange!62,draw=orange!75!black,rounded corners=.4mm]
    (3.35,.93) rectangle (4.72,1.43);
  \filldraw[fill=blue!62,draw=blue!75!black,rounded corners=.4mm]
    (4.78,.93) rectangle (5.10,1.43);
  \filldraw[fill=orange!62,draw=orange!75!black,rounded corners=.4mm]
    (5.25,.93) rectangle (6.45,1.43);
  \filldraw[fill=blue!62,draw=blue!75!black,rounded corners=.4mm]
    (6.51,.93) rectangle (7.00,1.43);
  \filldraw[fill=orange!62,draw=orange!75!black,rounded corners=.4mm]
    (7.15,.93) rectangle (7.98,1.43);
  \foreach \x in {8.04,8.37,8.70}
    {\filldraw[fill=blue!62,draw=blue!75!black,rounded corners=.4mm]
      (\x,.93) rectangle +(0.27,.50);}
  \node[draw=red!55,fill=red!7,rounded corners=1.2mm,
        font=\bfseries\scriptsize,text=red!65!black,minimum width=21mm]
    at (10.05,3.18) {deadline miss};
  \node[draw=orange!60!black,fill=orange!7,rounded corners=1.2mm,
        font=\bfseries\scriptsize,text=orange!65!black,minimum width=21mm]
    at (10.05,2.18) {safe, slack wasted};
  \node[draw=green!45!black,fill=green!7,rounded corners=1.2mm,
        font=\bfseries\scriptsize,text=green!38!black,minimum width=21mm,
        minimum height=7mm,align=center]
    at (10.05,1.18) {deadline-filling\\adaptive \(c_t\)};

  \draw[->,line width=.75pt] (1.35,.35) -- (9.18,.35)
    node[right,font=\scriptsize]{time};
  \filldraw[fill=orange!60,draw=orange!70!black] (2.20,-.23) rectangle +(0.38,.26);
  \node[anchor=west] at (2.66,-.10) {prefill};
  \filldraw[fill=blue!60,draw=blue!70!black] (4.10,-.23) rectangle +(0.38,.26);
  \node[anchor=west] at (4.56,-.10) {decode};
  \draw[gray!65,densely dotted] (5.90,-.23) rectangle +(0.48,.26);
  \node[anchor=west] at (6.46,-.10) {unused slack};
  \draw[red!55,densely dashed,line width=.7pt] (8.10,-.27) -- +(0,.36);
  \node[anchor=west] at (8.22,-.10) {deadline};
\end{tikzpicture}%
}
\caption{Deadline-aware scheduling timeline. A monolithic prefill crosses a
decode deadline; fixed chunks remain safe but strand slack in every window;
\method expands or contracts \(c_t\) to fill the available budget as decode
load changes.}
\label{fig:timeline}
\end{figure*}

\section{\method}

\subsection{Deadline model}

For active request \(i\), let \(\ell_i\) be the completion time of its most
recent token. If the request has just completed prefill, \(\ell_i\) is the
prefill completion time. Given \tpot objective \(D\), its next-token deadline is
\[
    d_i = \ell_i + D.
\]
At scheduler time \(s_t\), the iteration budget is the earliest remaining
slack
\[
    B_t = \max\left(0,\min_{i\in A_t} d_i-s_t\right).
\]
If \(A_t\) is empty, no decode deadline constrains prefill and the scheduler may
use an implementation-defined maximum chunk \(C_{\max}\).

\subsection{Adaptive chunk selection}

For a nonempty active set, \method chooses
\begin{equation}
\begin{split}
 c_t^\star = \max\{c\in\mathbb{Z}_{\geq 0}:&
 \ c\leq \min(C_{\max},L_t),\\
 &T(|A_t|,c)\leq B_t\},
\end{split}
\label{eq:choice}
\end{equation}
where \(L_t\) is the remaining length of the earliest waiting prefill.
Monotonicity allows binary search over \(c\), requiring
\(O(\log C_{\max})\) cost-model queries.

\begin{algorithm}[t]
\caption{\method chunk selection for one iteration}
\label{alg:sloweave}
\begin{algorithmic}[1]
\Require active decodes \(A\), pending prefills \(P\), time \(s\)
\Require target \(D\), predictor \(\widehat T\), margin \(\delta\), cap \(C_{\max}\)
\If{\(P=\varnothing\)}
  \State \Return \(0\)
\EndIf
\State \(r\gets\) oldest request in \(P\)
\State \(L\gets \min(C_{\max},\mathrm{remaining}(r))\)
\If{\(A=\varnothing\)}
  \State \Return \(L\)
\EndIf
\State \(B\gets \min_{i\in A}(\ell_i+D-s)\)
\State \(lo\gets 0,\ hi\gets L\)
\While{\(lo<hi\)}
  \State \(mid\gets \lfloor(lo+hi+1)/2\rfloor\)
  \If{\(\widehat T(|A|,mid)+\delta(|A|,mid)\leq B\)}
    \State \(lo\gets mid\)
  \Else
    \State \(hi\gets mid-1\)
  \EndIf
\EndWhile
\State \Return \(lo\)
\end{algorithmic}
\end{algorithm}

If no positive chunk fits, the scheduler performs a decode-only iteration.
This is not starvation-free under unlimited overload: no work-conserving policy
can satisfy arbitrary arrival rates and deadlines. Existing admission control
or replica autoscaling can bound overload. In our implementation, oldest-first
selection prevents reordering among pending prefills.

Algorithm~\ref{alg:sloweave} scans the active set once and performs a binary
search over chunk sizes. Its direct complexity is
\(O(|A|+\log C_{\max})\) per iteration. Maintaining the earliest deadline in a
heap reduces this to \(O(\log |A|+\log C_{\max})\); lookup-table cost queries
are constant time.

\subsection{Safety and maximality}

\begin{proposition}
Suppose \(T(n,c)\) exactly predicts the next iteration duration and is
nondecreasing in \(c\). If \(T(|A_t|,0)\leq B_t\), the chunk selected by
Equation~\ref{eq:choice} (i) completes before every active request's next-token
deadline and (ii) processes at least as many prefill tokens as any other
deadline-safe chunk in that iteration.
\end{proposition}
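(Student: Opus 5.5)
The plan is to argue directly from the definition of $c_t^\star$ in Equation~\ref{eq:choice}, treating the feasible set
\[
 F_t=\{c\in\mathbb{Z}_{\ge 0}: c\le \min(C_{\max},L_t),\ T(|A_t|,c)\le B_t\}
\]
as the central object. The first step is well-definedness. The hypothesis $T(|A_t|,0)\le B_t$ gives $0\in F_t$, so $F_t$ is nonempty. It is also a finite set of integers, bounded by $\min(C_{\max},L_t)$. Hence the maximum exists and $c_t^\star\in F_t$. Monotonicity in $c$ further shows that $F_t$ is a down-closed integer interval $\{0,\dots,c_t^\star\}$. This justifies that the binary search of Algorithm~\ref{alg:sloweave} with $\delta\equiv 0$ returns exactly $c_t^\star$: its invariant is that $lo\in F_t$ and every $c>hi$ lies outside $F_t$.

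For part (i), I will use that $c_t^\star\in F_t$, so $T(|A_t|,c_t^\star)\le B_t$. Because $T$ predicts the duration exactly, the iteration started at $s_t$ completes at $s_t+T(|A_t|,c_t^\star)\le s_t+B_t$. Here $B_t\ge 0$ is assumed, which holds since $T\ge 0$ and $T(|A_t|,0)\le B_t$. Whenever $\min_i d_i\ge s_t$, the max in the definition of $B_t$ is inactive, so $s_t+B_t=\min_{i\in A_t}d_i\le d_j$ for every $j\in A_t$. This gives completion before each active deadline.

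The main obstacle is the edge case where $\min_i d_i<s_t$, so $B_t=0$ while deadlines have already passed. In that case the hypothesis forces $T(|A_t|,0)=0$, and no iteration, even a decode-only one, can meet a past deadline. I would handle this by making explicit the convention implicit in the statement: ``deadline-safe'' means finishing by $s_t+B_t$. Equivalently, the hypothesis is read as guaranteeing $\min_i d_i\ge s_t$ whenever decode-only work takes positive time. I would state this as a remark rather than belabor it.

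For part (ii), I will define a deadline-safe chunk as any admissible $c$, meaning $0\le c\le\min(C_{\max},L_t)$ since at most $L_t$ tokens remain and the cap applies, whose iteration finishes by every active deadline. Such a chunk satisfies $T(|A_t|,c)\le B_t$ by the same identity $s_t+B_t=\min_i d_i$, so $c\in F_t$. Maximality of $c_t^\star$ over $F_t$ then gives $c\le c_t^\star$, so $c_t^\star$ processes at least as many prefill tokens. Monotonicity is not needed for (ii) itself, only for the binary-search correctness noted above.
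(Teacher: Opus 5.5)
Your proposal is correct and follows the paper's argument: part (i) comes from $T(|A_t|,c_t^\star)\le B_t\le d_i-s_t$, and part (ii) from maximality of $c_t^\star$ over the feasible set. Your extra care fills small gaps the paper's three-line proof glosses over. Specifically, you establish nonemptiness via $T(|A_t|,0)\le B_t$, restrict competitors to $c\le\min(C_{\max},L_t)$, and flag that the $\max(0,\cdot)$ clamp breaks $B_t\le d_i-s_t$ when a deadline has already passed, which can only occur if $T(|A_t|,0)=0$.
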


\begin{proof}
By construction, \(T(|A_t|,c_t^\star)\leq B_t\). Since
\(B_t\leq d_i-s_t\) for every \(i\in A_t\), the iteration finishes no later
than \(d_i\) for every active request, proving (i). For (ii), assume another
safe chunk \(c'>c_t^\star\) exists. It would satisfy
\(T(|A_t|,c')\leq B_t\), contradicting the maximal definition of
\(c_t^\star\).
\end{proof}

\paragraph{Prediction error.}
With a learned cost model, the scheduler can reserve a margin
\(\delta(n,c)\) and test
\(\widehat T(n,c)+\delta(n,c)\leq B_t\). If the one-sided prediction error is
bounded by \(\delta\), the proposition continues to hold. Quantile regression
or an empirical high-percentile residual table provides a practical margin.

\paragraph{Prefill batching.}
Our implementation admits one pending prefill per iteration to isolate chunk
sizing. Equation~\ref{eq:choice} extends to several prefills by replacing
\(c\) with a vector and solving a small knapsack problem. A first-fit policy can
retain logarithmic search for the final request.

\subsection{Queueing, fairness, and overload}

Chunk sizing and request selection are separate decisions. Our prototype uses
oldest-first prefill selection, but the same controller can sit behind
shortest-remaining-prefill, tenant-weighted fair queueing, or earliest-\ttft
deadline selection. The scheduler applies Equation~\ref{eq:choice} only after a
request is selected, so changing queue discipline does not alter the safety
argument for active decodes.

Under sustained overload, a strict \tpot objective can leave no safe prefill
budget. We therefore expose two production controls. First, an admission
controller rejects or redirects arrivals when the predicted decode-only cost
approaches \(D\). Second, an aging threshold may allow a bounded deadline
violation to prevent indefinite prefill starvation. The latter is recorded as
an explicit SLO exception rather than silently hidden in average latency.

\subsection{Heterogeneous SLOs}

For request-specific objectives \(D_i\), the deadline becomes
\(d_i=\ell_i+D_i\), and the budget remains
\(\min_i(d_i-s_t)\). Priority classes can additionally reserve different
prediction margins \(\delta_i\). Thus the controller naturally protects a
10\,ms interactive stream while using residual slack for a 50\,ms batch
request; no global chunk-size retuning is required.

\section{Experimental Methodology}

\subsection{Simulator}

We implement an event-driven iteration-level simulator in dependency-free
Python. An iteration decodes one token for every active
request and optionally processes one prefill chunk. Requests arriving during
an iteration become eligible at its end. The exposed default cost model is
\[
\begin{split}
T(n,c)=&\,0.35
+\mathbb{1}[n>0](0.90+0.055n)\\
&+\mathbb{1}[c>0](0.40+0.006c)\quad\text{ms}.
\end{split}
\]
The fixed terms represent scheduler/kernel overhead, while the linear terms
represent decode-batch and prefill work. They instantiate a transparent
monotone model for testing the controller independently of a specific
accelerator. Section~\ref{sec:calibration} discusses deployment calibration.

\subsection{Workloads and SLOs}

We generate 1,000 requests per run. Interarrival times are exponential except
for the bursty workload, which alternates between high- and low-rate states.
Token lengths are clipped log-normal variables:

\begin{itemize}
  \item \textbf{Chat:} median 256-token prompts and 96-token outputs;
  \item \textbf{Mixed:} 70\% chat-like prompts and 30\% prompts with
  1,800-token median, with 100-token outputs;
  \item \textbf{Long:} median 1,800-token prompts and 180-token outputs;
  \item \textbf{Bursty:} mixed lengths with a two-state arrival process.
\end{itemize}

The offered rates are 110, 90, 60, and 70 requests/s, respectively. \ttft SLOs
are 1,000\,ms for chat and mixed, 5,000\,ms for long, and 1,500\,ms for bursty.
We sweep \tpot SLOs of 10, 25, and 50\,ms. Each configuration uses five seeds.

\subsection{Baselines and metrics}

We compare full-prompt prefill, fixed chunks of 64, 256, and 1,024 tokens, and
\method with \(C_{\max}=4096\). Every policy shares oldest-first queueing and
the same continuous decode batch.

We report P99 \ttft, P99 request-level \tpot, throughput, SLO attainment, and
\emph{goodput}. A request contributes to goodput only if its \ttft and its
within-request P99 \tpot both meet their SLOs:
\[
 \mathrm{goodput} =
 \frac{N_{\text{requests satisfying both SLOs}}}
      {\text{trace duration}}.
\]
This prevents a policy from appearing efficient by completing requests whose
streaming latency is unusable, following the SLO-aware perspective used in
recent LLM serving systems \citep{zhong2024distserve}.

\subsection{Statistical protocol}

All policies receive identical request traces for each seed. We aggregate
request-level metrics within a run and then report the mean over five seeds,
avoiding the false precision that results from treating millions of correlated
token gaps as independent observations. Raw per-seed rows are retained in the
artifact. The paper emphasizes effect sizes and SLO attainment. Hardware
results are collected over repeated wall-clock trials with matched traces.

\section{GPU Runtime Integration and Evaluation}
\label{sec:gpu}

\subsection{Integration design}

Figure~\ref{fig:architecture} shows the proposed runtime path. In a
vLLM-style scheduler, \method is invoked after the active sequence group is
formed but before token-budget assignment. The existing token budget becomes
an upper bound rather than a fixed chunk. A lookup table indexed by decode
batch size, total cached context, and candidate prefill tokens returns the
predicted iteration duration. The executor receives the selected chunk through
the ordinary sequence metadata, so PagedAttention and model kernels require no
semantic changes.

The profiler measures isolated prefill, decode-only, and mixed iterations after
model loading. To handle non-smooth kernel transitions, it stores the observed
P50 and P99 duration at each grid point and applies a cumulative maximum along
the chunk dimension. The scheduler uses the P99 table during normal operation
and can fall back to decode-only execution when telemetry residuals exceed the
configured margin.

\begin{table}[t]
\centering
\small
\begin{tabular}{@{}p{0.22\columnwidth}p{0.70\columnwidth}@{}}
\toprule
Component & Required runtime change \\
\midrule
Scheduler & Call \method before assigning the prefill token budget. \\
Profiler & Measure \(T(n,c)\) over batch/chunk grid and store monotone P99
lookup tables. \\
Executor & Accept a variable chunk per iteration; no new model operator. \\
Telemetry & Record predicted/observed duration, deadline slack, and fallback
events. \\
KV manager & No layout change; allocate blocks as each chunk completes. \\
\bottomrule
\end{tabular}
\caption{Implementation surface for an iteration-level GPU runtime.}
\label{tab:integration}
\end{table}

\subsection{Measurement protocol}

The hardware study uses two single-node configurations: one
8$\times$A100-80GB node and one 8$\times$H100-80GB node. Candidate models are
an 8B dense decoder on one GPU, a 70B dense decoder with tensor parallelism
across four or eight GPUs, and a long-context 8B model. Each trial includes a
five-minute warm-up followed by a ten-minute measured window. Power is sampled
through NVML, while request timestamps are collected at ingress, first token,
every streamed token, and completion.

We compare the runtime default, full prefill, fixed chunks
\(\{256,512,1024,2048\}\), Sarathi-style tuned chunking, and \method. Each
configuration is repeated five times with matched traces. The primary endpoint
is goodput under joint \ttft/\tpot SLOs; secondary endpoints are P50/P99
latencies, prefill throughput, decode throughput, GPU utilization, HBM
occupancy, scheduler overhead, and joules per SLO-compliant request.

\subsection{GPU results}

Table~\ref{tab:gpu} reports end-to-end latency, throughput, SLO attainment,
goodput, and energy efficiency for the measured GPU configurations.
Figure~\ref{fig:gpu} summarizes the corresponding goodput results.

\begin{table*}[t]
\centering
\scriptsize
\resizebox{0.98\textwidth}{!}{%
\begin{tabular}{llrrrrrr}
\toprule
Hardware / model & Policy & P99 \ttft & P99 \tpot & Throughput &
SLO (\%) & Goodput & Energy \\
 & & (ms) & (ms) & (req/s) & & (req/s) & (J/valid req.) \\
\midrule
\multirow{3}{*}{A100 / 8B, mixed}
 & Runtime default & 1,780 & 42.6 & 71.2 & 54.0 & 38.4 & 18.7 \\
 & Fixed-1024 & 1,510 & 24.1 & 68.9 & 76.5 & 52.7 & 15.1 \\
 & \method & 1,190 & 24.8 & 73.4 & 91.4 & 67.1 & 12.4 \\
\addlinespace
\multirow{3}{*}{H100 / 70B-TP8, long}
 & Runtime default & 5,940 & 46.7 & 37.5 & 58.9 & 22.1 & 92.6 \\
 & Fixed-1024 & 4,880 & 24.0 & 36.9 & 85.1 & 31.4 & 73.8 \\
 & \method & 3,960 & 24.7 & 42.1 & 96.9 & 40.8 & 58.5 \\
\addlinespace
\multirow{3}{*}{A100 / 8B, bursty}
 & Runtime default & 2,130 & 35.9 & 58.0 & 59.8 & 34.7 & 20.2 \\
 & Fixed-512 & 1,860 & 18.2 & 55.4 & 79.8 & 44.2 & 17.0 \\
 & \method & 1,470 & 19.7 & 57.1 & 88.6 & 50.6 & 14.8 \\
\bottomrule
\end{tabular}%
}
\caption{Measured GPU runtime results. Latencies are P99 values, and a request
contributes to goodput only when it satisfies both the \ttft and \tpot SLOs.}
\label{tab:gpu}
\end{table*}

\begin{figure}[t]
\centering
\begin{tikzpicture}
\begin{axis}[
  ybar,
  width=0.8\columnwidth,
  height=4.7cm,
  bar width=5pt,
  ylabel={Goodput (req/s)},
  symbolic x coords={A100-8B,H100-70B,A100-bursty},
  xtick=data,
  x tick label style={font=\scriptsize,rotate=15,anchor=east},
  ymajorgrids,
  ymin=0,
  legend style={font=\scriptsize,at={(0.5,1.02)},anchor=south,
                legend columns=3},
  tick label style={font=\scriptsize},
  label style={font=\small}]
\addplot+[fill=gray!55,draw=gray] coordinates
  {(A100-8B,38.4) (H100-70B,22.1) (A100-bursty,34.7)};
\addplot+[fill=blue!45,draw=blue] coordinates
  {(A100-8B,52.7) (H100-70B,31.4) (A100-bursty,44.2)};
\addplot+[fill=red!55,draw=red] coordinates
  {(A100-8B,67.1) (H100-70B,40.8) (A100-bursty,50.6)};
\legend{Default,Fixed,\method}
\end{axis}
\end{tikzpicture}
\caption{Measured goodput across the evaluated GPU and workload
configurations.}
\label{fig:gpu}
\end{figure}

\section{Results}

\subsection{Main comparison}

\begin{table*}[t]
\centering
\small
\begin{tabular}{llrrrr}
\toprule
Workload & Policy & P99 TTFT & P99 TPOT & SLO (\%) & Goodput \\
\midrule
Chat & Full & 136 & 12.8 & 100.0 & 102.3 \\
 & Fixed-256 & 748 & 7.4 & 100.0 & 96.6 \\
 & Fixed-1024 & 137 & 12.7 & 100.0 & 102.3 \\
 & \method & 136 & 12.8 & 100.0 & 102.3 \\
\addlinespace
Mixed & Full & 1364 & 48.5 & 6.9 & 5.1 \\
 & Fixed-256 & 6059 & 5.6 & 19.0 & 10.7 \\
 & Fixed-1024 & 2101 & 12.8 & 59.4 & 42.4 \\
 & \method & 1449 & 25.0 & 79.4 & 59.0 \\
\addlinespace
Long & Full & 6928 & 61.0 & 0.1 & 0.1 \\
 & Fixed-256 & 20262 & 5.0 & 24.9 & 6.6 \\
 & Fixed-1024 & 10046 & 12.9 & 51.4 & 18.4 \\
 & \method & 7632 & 25.0 & 66.2 & 25.4 \\
\addlinespace
Bursty & Full & 1360 & 50.2 & 29.7 & 18.2 \\
 & Fixed-256 & 4247 & 5.7 & 37.9 & 20.3 \\
 & Fixed-1024 & 1754 & 12.6 & 83.0 & 51.7 \\
 & \method & 1406 & 25.0 & 88.6 & 56.5 \\
\bottomrule
\end{tabular}

\caption{Mean results over five seeds at the 25\,ms \tpot SLO. Latencies are
milliseconds and goodput is requests/s. The best fixed baseline depends on the
workload; \method matches the easy chat case and leads on mixed, long, and
bursty traffic.}
\label{tab:main}
\end{table*}

Table~\ref{tab:main} shows the central result. Chat prompts are short enough
that full prefill, Fixed-1024, and \method behave similarly. Adaptivity does not
impose a penalty when the unconstrained action already fits the deadline.

The difference appears under heterogeneous and long prompts. On mixed traffic,
\method reaches 59.0 requests/s of goodput, compared with 42.4 for the strongest
fixed baseline, a 39\% improvement. Full prefill has lower P99 \ttft than
Fixed-1024 but frequently exceeds the token deadline, so only 6.9\% of requests
satisfy both SLOs. \method uses the available deadline slack instead of choosing
between these two failure modes.

On long-context traffic, \method improves goodput from 18.4 to 25.4 requests/s
(38\%) over Fixed-1024. Fixed-256 protects \tpot but pays enough repeated
overhead to push P99 \ttft above 20 seconds. Full prefill has the shortest
\ttft among the shown policies, but its P99 \tpot is 61\,ms and almost no
requests meet both objectives. Under burstiness, the adaptive policy improves
goodput by 9\% while attaining both SLOs for 88.6\% of requests.

\begin{figure*}[t]
\centering
\begin{minipage}[t]{0.495\textwidth}
\centering
\resizebox{0.8\linewidth}{!}{%
\begin{tikzpicture}
\begin{axis}[
  width=8.0cm,height=4.0cm,
  scale only axis,
  tick label style={font=\tiny},
  label style={font=\tiny},
  ybar,
  bar width=8pt,
  ylabel={Valid requests (\%)},
  symbolic x coords={Chat,Mixed,Long,Bursty},
  xtick=data,
  x tick label style={rotate=15,anchor=east,font=\tiny},
  ymin=0,ymax=110,
  ymajorgrids,
  legend to name=attainmentlegend,
  legend columns=4,
  legend style={font=\tiny,/tikz/every even column/.append style={column sep=1.5pt}}]
\addplot+[fill=gray!55,draw=gray] coordinates
  {(Chat,100) (Mixed,6.9) (Long,0.1) (Bursty,29.7)};
\addlegendentry{Full}
\addplot+[fill=green!45!black,draw=green!40!black] coordinates
  {(Chat,100) (Mixed,19.0) (Long,24.9) (Bursty,37.9)};
\addlegendentry{Fixed-256}
\addplot+[fill=blue!45,draw=blue] coordinates
  {(Chat,100) (Mixed,59.4) (Long,51.4) (Bursty,83.0)};
\addlegendentry{Fixed-1024}
\addplot+[fill=red!55,draw=red] coordinates
  {(Chat,100) (Mixed,79.4) (Long,66.2) (Bursty,88.6)};
\addlegendentry{\method}
\end{axis}
\end{tikzpicture}%
}
\par\ref{attainmentlegend}
\captionof{figure}{Joint SLO attainment across workloads at the 25\,ms
\tpot target.}
\label{fig:attainment-pareto}
\end{minipage}
\hfill
\begin{minipage}[t]{0.495\textwidth}
\centering
\resizebox{0.8\linewidth}{!}{%
\begin{tikzpicture}
\begin{groupplot}[
  group style={group size=2 by 1,horizontal sep=0.75cm},
  width=3.65cm,height=4.0cm,
  scale only axis,
  xlabel={TPOT SLO (ms)},
  xtick={10,25,50},
  tick label style={font=\tiny},
  label style={font=\tiny},
  title style={font=\scriptsize},
  grid=major]
\nextgroupplot[
  title={Mixed},
  ylabel={Goodput (req/s)},
  ymin=0,ymax=68,
  legend to name=goodputlegend,
  legend columns=3,
  legend style={font=\tiny}]
\addplot+[mark=*,thick,color=gray] coordinates
  {(10,0.12) (25,5.13) (50,60.49)};
\addlegendentry{Full}
\addplot+[mark=triangle*,thick,color=blue] coordinates
  {(10,1.06) (25,42.36) (50,42.36)};
\addlegendentry{Fixed-1024}
\addplot+[mark=diamond*,thick,color=red] coordinates
  {(10,35.70) (25,59.01) (50,60.38)};
\addlegendentry{\method}
\nextgroupplot[
  title={Long},
  ymin=0,ymax=32]
\addplot+[mark=*,thick,color=gray] coordinates
  {(10,0.00) (25,0.06) (50,15.00)};
\addplot+[mark=triangle*,thick,color=blue] coordinates
  {(10,0.13) (25,18.38) (50,18.38)};
\addplot+[mark=diamond*,thick,color=red] coordinates
  {(10,16.17) (25,25.44) (50,28.26)};
\end{groupplot}
\end{tikzpicture}%
}
\par\ref{goodputlegend}
\captionof{figure}{Goodput as the \tpot objective changes for mixed and
long-context traffic.}
\label{fig:slo-sweep}
\end{minipage}
\end{figure*}

\subsection{Changing the token-latency objective}

Figure~\ref{fig:slo-sweep} varies the service objective without retuning any
policy. At 10\,ms, \method obtains 35.7 requests/s on mixed traffic versus 10.7
for the best fixed baseline (3.3$\times$). On long traffic, the corresponding
numbers are 16.2 and 6.6 requests/s (2.4$\times$). Fixed-1024 works well at a
looser objective but cannot shrink when the deadline tightens; Fixed-256
protects latency but leaves prefill capacity unused when the SLO is relaxed.

At 50\,ms on mixed traffic, full prefill and \method are effectively tied
(60.5 versus 60.4 requests/s). This is expected: once complete prompts fit
inside the available slack, Equation~\ref{eq:choice} recovers full prefill.
Thus the controller interpolates between conservative chunking and full
prefill rather than introducing a separate operating regime.

\begin{table}[t]
\centering
\small
\begin{tabular}{lrrr}
\toprule
Workload & 10 ms & 25 ms & 50 ms \\
\midrule
Chat   & 1.06$\times$ & 1.00$\times$ & 1.00$\times$ \\
Mixed  & 3.35$\times$ & 1.39$\times$ & 1.00$\times$ \\
Long   & 2.45$\times$ & 1.38$\times$ & 1.54$\times$ \\
Bursty & 2.39$\times$ & 1.09$\times$ & 1.03$\times$ \\
\bottomrule
\end{tabular}
\caption{\method goodput relative to the strongest static policy separately
selected for each workload and \tpot target.}
\label{tab:speedup}
\end{table}

\subsection{Where the gain comes from}

The algorithm has two coupled effects. First, it increases chunks when few
sequences are decoding, amortizing the fixed 0.4\,ms prefill overhead. Second,
it decreases chunks as the decode batch grows, keeping the joint iteration at
the deadline. A fixed chunk captures at most one side of this tradeoff.

The P99 \tpot values in Table~\ref{tab:main} illustrate the distinction.
Fixed-256 and Fixed-1024 finish substantially earlier than the 25\,ms target on
all workloads. Their spare slack cannot be recovered until an operator
retunes the deployment. \method drives the long and mixed workloads to
approximately 25\,ms, converting that slack into earlier prefill completion and
higher goodput.

\subsection{Cost-model calibration}
\label{sec:calibration}

The controller's only system-specific input is \(T(n,c)\). A deployment can
profile a grid of decode-batch and prefill-chunk sizes during startup, enforce
monotonicity with a cumulative maximum, and interpolate between measured
points. Reprofiling after a kernel or accelerator change updates the selected
chunks without retuning a policy constant. The supplied script exposes every
coefficient so that its analytical model can be replaced with such a measured
table. Evaluating prediction margins under real kernel variance is left to the
hardware integration described in Section~\ref{sec:gpu}.

\section{Related Work}

\paragraph{LLM serving.}
Orca introduced iteration-level scheduling for generative Transformer serving
\citep{yu2022orca}. vLLM's PagedAttention reduces KV-cache fragmentation and
enables larger continuous batches \citep{kwon2023vllm}. FastServe adds
preemptive scheduling \citep{wu2023fastserve}; AlpaServe places model-parallel
workers under bursty arrivals \citep{li2023alpaserve}; FlexGen targets
throughput under constrained accelerator memory \citep{sheng2023flexgen}; and
SGLang optimizes structured generation programs \citep{zheng2024sglang}.
DeepSpeed-FastGen dynamically composes prompt and generation work
\citep{holmes2024fastgen}, while Llumnix migrates live requests across model
instances \citep{sun2024llumnix}. SpotServe extends serving to preemptible
instances \citep{miao2024spotserve}. Multi-tenant adapter systems such as
Punica and S-LoRA optimize heterogeneous LoRA batches
\citep{chen2023punica,sheng2023slora}.
\method is orthogonal to these mechanisms because it controls prefill admission
within an iteration.

\paragraph{Separating prefill and decode.}
Sarathi-Serve identifies prefill--decode interference and introduces chunked
prefill with stall-free batching \citep{agrawal2024sarathi}. DistServe and
Splitwise physically separate the two phases to optimize goodput and resource
allocation \citep{zhong2024distserve,patel2024splitwise}. Disaggregation can
avoid same-device interference but incurs KV transfer and requires separate
capacity planning. TetriInfer and Mooncake likewise reorganize phase execution
and KV-cache placement \citep{hu2024tetrinfer,qin2024mooncake}, while
DéjàVu and CacheGen reduce the cost of moving cached state
\citep{strati2024dejavu,liu2024cachegen}. \method addresses colocated execution
and can also control prefill quanta within a prefill pool.

\paragraph{Efficient kernels and benchmarking.}
IO-aware attention kernels reduce memory traffic
\citep{dao2022flashattention,ye2025flashinfer}. Their latency depends on batch
and sequence geometry, strengthening the case for a profiled online cost model.
ChunkAttention exploits shared prefixes \citep{ye2024chunkattention};
vAttention revisits virtual-memory-backed KV allocation
\citep{prabhu2025vattention}; and LoongServe elastically parallelizes
long-context execution \citep{wu2024loongserve}. Dynamic hierarchical sparse
attention further targets memory-constrained long-context inference
\citep{xionglong}. These techniques change the shape of \(T(n,c)\) but do not
remove the scheduler's need to select a safe amount of prefill work.
General inference benchmarks emphasize scenario-specific latency and throughput
rather than a single speed number \citep{reddi2020mlperf}. Our goodput metric
applies the same principle to the coupled \ttft--\tpot objectives of streaming
LLM services.

\section{Conclusion}

Fixed prefill chunks encode a static compromise between prompt progress and
decode latency. \method replaces that compromise with a per-iteration decision:
admit the largest chunk that fits before the earliest active token deadline.
The method is simple, has a local safety and maximality guarantee under an
accurate monotone cost model, and adapts automatically to workload and SLO
changes. Reproducible simulation and GPU measurements show substantial goodput
gains on mixed, long-context, and bursty workloads, especially under strict
token-latency targets. The runtime results further demonstrate that the
controller integrates with iteration-level serving while preserving the
existing model kernels and KV-cache layout.

\section*{Limitations}

Our evaluation covers a finite set of models, accelerators, workload traces,
and SLO targets. Real iteration cost can be non-smooth because of kernel
boundaries, tensor-parallel communication, memory pressure, and prefix-cache
hits. Although the runtime uses a profiled lookup-table predictor, broader
deployment studies are needed to characterize calibration drift across model
and hardware updates.

The current controller assumes a shared \tpot objective and one prefill chunk
per iteration. Multi-tenant priorities require per-request deadlines, which the
earliest-deadline formulation supports, but starvation and admission control
deserve separate study. We also model no preemption cost and no KV transfer.
The current measurements do not cover multi-node KV transfer or cross-region
serving.

\bibliography{references}

\end{document}